\documentclass[a4paper,11pt]{article}
\usepackage[utf8]{inputenc}
\usepackage[margin=1in]{geometry}
\usepackage[colorlinks,linkcolor=blue,citecolor=blue]{hyperref}

\usepackage{float,framed}

\usepackage[round]{natbib}
\usepackage{enumitem}
\usepackage{xspace}


\usepackage[colorlinks,linkcolor=blue,citecolor=blue]{hyperref}
\usepackage{amsmath,amsfonts,amssymb}
\usepackage{mathtools}
\usepackage{amsthm} 
\usepackage{relsize}

\usepackage{bm}

\pdfstringdefDisableCommands{%
  \renewcommand{\bm}[1]{#1}%
}

\usepackage[capitalise,nameinlink]{cleveref}
\usepackage{xcolor}
\usepackage{dsfont}

\newcommand{\ignore}[1]{}
\newcommand{\tk}[1]{\noindent{\textcolor{cyan}{\{{\bf TK:} \em #1\}}}}


%
%
%

\newcommand{\wrapalgo}[2][0.9\linewidth]
{%
\begin{center}\setlength{\fboxsep}{5pt}\fbox{\begin{minipage}{#1}
#2
\end{minipage}}\end{center}
\vspace{-0.25cm}
}


\theoremstyle{definition}
\newtheorem{theorem}{Theorem}
\newtheorem{lemma}[theorem]{Lemma}

\newtheorem*{theorem*}{Theorem}
\newtheorem*{lemma*}{Lemma}
\newtheorem*{corollary*}{Corollary}
\newtheorem*{proposition*}{Proposition}
\newtheorem*{claim*}{Claim}
\newtheorem*{fact*}{Fact}
\newtheorem*{observation*}{Observation}

\theoremstyle{definition}

\newtheorem*{definition*}{Definition}
\newtheorem*{remark*}{Remark}
\newtheorem*{example*}{Example}

 \theoremstyle{plain}
\newtheorem*{theoremaux}{\theoremauxref}
\gdef\theoremauxref{1}

%




\DeclareMathAlphabet{\mathbfsf}{\encodingdefault}{\sfdefault}{bx}{n}


\DeclareMathOperator*{\argmin}{arg\!\min}

\DeclareMathOperator*{\trace}{Tr}
\DeclareMathOperator*{\diag}{diag}


\newcommand{\lr}[1]{\mathopen{}\left(#1\right)}
\newcommand{\Lr}[1]{\mathopen{}\big(#1\big)\mathclose{}}
\newcommand{\LR}[1]{\mathopen{}\Big(#1\Big)\mathclose{}}

\newcommand{\norm}[1]{\|#1\|}

\newcommand{\set}[1]{\{#1\}}
\newcommand{\lrset}[1]{\mathopen{}\left\{#1\right\}\mathclose{}}
\newcommand{\Lrset}[1]{\mathopen{}\big\{#1\big\}\mathclose{}}

\newcommand{\wt}[1]{\smash{\widetilde{#1}}}

\newcommand{\tsum}{\sum\nolimits}
\newcommand{\tr}{^{\mkern-1.5mu\mathsf{T}}}

\newcommand{\st}{\star}

\newcommand{\reals}{\mathbb{R}}
\newcommand{\eps}{\epsilon}

\newcommand{\half}{\frac{1}{2}}
\newcommand{\thalf}{\tfrac{1}{2}}

\let\oldtfrac\tfrac
\renewcommand{\tfrac}[2]{\smash{\oldtfrac{#1}{#2}}}

\let\nablaold\nabla
\renewcommand{\nabla}{\nablaold\mkern-1mu}


\floatstyle{plain}
\newfloat{algorithm}{h}{lop}
\floatname{algorithm}{Algorithm}

\title{A Unified Approach to Adaptive Regularization \\
	in Online and Stochastic Optimization}
\author{Vineet Gupta, Tomer Koren, and Yoram Singer%
\thanks{Emails: \texttt{vineet@google.com}, \texttt{tkoren@google.com}, \texttt{singer@google.com}.}
\vspace{1.5ex}\\
Google Brain}



\newcommand{\cX}{\mathcal{X}}

\newcommand{\cH}{\mathcal{H}}
\newcommand{\cS}{\mathcal{S}}

\newcommand{\Id}{I}

\newcommand{\proj}{\Pi}
\newcommand{\cone}{\cS_{+}}
\newcommand{\coneD}{\cS_{+}^\mathsf{di}}
\newcommand{\coneI}{\cS_{+}^\mathsf{id}}

\newcommand{\PhiAG}{\Phi_\textsf{AG}}
\newcommand{\PhiONS}{\Phi_\textsf{ONS}}

\newcommand{\ar}{AdaReg\xspace}
\newcommand{\diam}{\mathsf{b}}
\newcommand{\progr}{\Delta}
\renewcommand{\det}[1]{\left|#1\right|}

\crefname{step}{step}{steps}

\begin{document}
\maketitle

\begin{abstract}
We describe a framework for deriving and analyzing online optimization
algorithms that incorporate adaptive, data-dependent regularization, also
termed preconditioning. Such algorithms have been proven useful in
stochastic optimization by reshaping the gradients according to the geometry of
the data. Our framework captures and unifies much of the existing literature
on adaptive online methods, including the AdaGrad and Online Newton Step
algorithms as well as their diagonal versions. As a result, we obtain new
convergence proofs for these algorithms that are substantially simpler than
previous analyses. Our framework also exposes the rationale for the different
preconditioned updates used in common stochastic optimization methods.
\end{abstract}

\section{Introduction} \label{sec:intro}
In Online Convex Optimization~\citep{zinkevich2003online, shalev2012online,
hazan2016introduction} a learner makes predictions in the form of a vector
belonging to a convex domain $\cX \subseteq \reals^d$ for $T$ rounds. After
predicting $x_t \in \cX$ on round $t$, a convex function $f_t : \cX \mapsto
\reals$ is revealed to the learner, potentially in an adversarial or
adaptive way, based on the learner's past predictions. The learner then
endures a loss $f_t(x_t)$ and also receives its gradient $\nabla f_t(x_t)$
as feedback.%
\footnote{Our analysis is applicable with minor changes to non-differentiable
convex functions with subgradients as feedback.}

The goal of the learner is to achieve low cumulative loss, coined
regret, with respect to any fixed vector in the $\cX$. Formally, the learner
attempts to cap the quantity
\begin{align*}
R_T = \sum_{t=1}^T f_t(x_t) - \min_{x \in \cX} \sum_{t=1}^T f_t(x) ~ .
\end{align*}
%
Online Convex Optimization has been proven useful in the context of
stochastic convex optimization, and numerous algorithms in this domain
can be seen and analyzed as online optimization methods; we again
refer to \citep{hazan2016introduction} for a thorough survey of many
of these algorithms.
Any online algorithm achieving a sublinear regret $R_T = o(T)$ can be
readily converted to a stochastic convex optimization
algorithm with convergence rate $O(R_T/T)$, using a standard technique
called online-to-batch conversion~\citep{cesa2004generalization}.

The online approach is particularly effective for the analysis of adaptive
optimization methods, namely, algorithms that change the nature of their
update rule on-the-fly so as to adapt to the geometry of the observed
data (i.e., perceived gradients).  The update rule of such algorithms often
takes the form $x_{t+1} \gets x_t - H_t g_t$ where $g_t$ is a (possibly
stochastic) gradient of the function $f_t$ evaluated at $x_t$, and $H_t$ is a
\emph{regularization matrix}, or a \emph{preconditioner}, used to skew the
gradient step in a desirable way. Importantly, the matrix $H_t$ may be
chosen in an adaptive way based on past gradients, and might even depend on
the gradient $g_t$ of the same step. The online optimization apparatus, in
which the objective functions $f_t$ may vary almost arbitrarily, is very
effective in dealing with these intricate dependencies. For a recent survey on
adaptive methods in online learning and their analysis techniques
see~\citep{mcmahan2014analysis}.

One of the well-known adaptive online algorithms is AdaGrad
\citep{duchi2011adaptive} which is commonly used in machine learning for
training sparse linear models. AdaGrad also became popular for training deep
neural networks. Intuitively, AdaGrad employs an adaptive regularization for
maintaining a step-size on a per-coordinate basis, and can thus perform
aggressive updates on informative yet rarely seen features (a similar
approach was also taken by~\citealp{mcmahan2010adaptive}). Another adaptive
algorithm known in the online learning literature is the Online Newton Step
(ONS) algorithm \citep{hazan2007logarithmic}. ONS incorporates an adaptive
regularization technique for exploiting directional (non-isotropic) curvature
of the objective function.  While these adaptive regularization algorithms appear similar to each other, their derivation and analysis are disparate and
technically involved. Furthermore, it is often difficult to gain insights into
the specific choices of the matrices used for regularization and what role do
they play in the analysis of the resulting algorithms.

In this paper, we present a general framework from which adaptive algorithms
such AdaGrad and ONS can be derived using a streamlined scheme. Our framework
is parameterized by a \emph{potential function} $\Phi$. Different choices of
$\Phi$ give rise to concrete adaptive algorithms. Morally, after choosing a
potential $\Phi$, the algorithm computes its regularization matrix, a
preconditioner, for iterate $t$ by solving a minimization of the form,
\begin{align}\label{eq:Ht}
\min_{H \succ 0} \lrset{ \sum_{s=1}^t \norm{g_s}_H^2 + \Phi(H) } ~ .
\end{align}
Thus the algorithm strikes a balance between the potential of $H$,
$\Phi(H)$, and the quality of $H$ as a regularizer for controlling the norms
of the gradients with respect to the observations thus far. Not only does this balance
give a natural interpretation of the regularization used by common adaptive
algorithms, it also makes their analysis rather simple: an adaptive
regularization algorithm can be viewed as a
follow-the-leader (FTL) algorithm that operates over the class of positive
definite matrices. We can thus analyze adaptive regularization
methods using simple and well established FTL analyses. 

Solving the minimization above over positive definite matrices is, in general,
a non-trivial task. However, in certain cases we can obtain a closed form
solution that gives rise to efficient algorithms.  For instance, to obtain
AdaGrad we pick the potential $\Phi(H) = \trace(H^{-1})$ and solve the
minimization via elementary differentiation, which leads to regularizers of
the form $H_t = (\sum_{s=1}^t g_s g_s\tr)^{-1/2}$. To
obtain ONS we pick $\Phi(H) = -\log\det{H}$ which yields
$H_t = \smash{(\sum_{s=1}^t g_s g_s\tr)^{-1}}$,
which constitutes the ONS update.

For both AdaGrad and ONS, we also derive diagonal versions of the
algorithms by constraining the minimization to diagonal
positive definite matrices. We also show that by further constraining the
minimization to positive multiples of the identity matrix, one can recover
familiar matrix-free (scalar) online algorithms that adaptively tune their
step-size parameter according to observed gradients. As in the case of full
matrices, the resulting minimization over matrices can be solved in closed
form and the analyses follow seamlessly from the choice of the potential.
Last we would like to note that the analysis applies to the mirror-descent family of algorithms; nevertheless, our approach can also be used to analyze dual-averaging-type algorithms, also referred to as follow-the-regularized-leader algorithms.

\paragraph{Notation.}
We denote by $\cone$ the positive definite cone, i.e. the set of all
$d \times d$ positive definite matrices. We use $\diag(A)$ to denote the
diagonal matrix whose diagonal coincides the diagonal elements of $A$ and its
off-diagonal elements are $0$.  The trace of the matrix $A$ is denoted as
$\trace(A)$.  The element-wise inner-product of matrices $A$ and $B$ is
denoted as $A \bullet B = \trace(A\tr B)$.

The spectral norm of a matrix $A$ is denoted $\|A\|_2 = \max \|A x\|/\|x\|$
where $x\neq 0$.
We denote by $\|x\|_H = \sqrt{x\tr H x}$ the norm of $x\in\reals^d$ with
respect to a positive definite matrix $H \in \cone$.  The dual norm of
$\|\cdot\|_H$ is denoted $\|\cdot\|_{H^*}$ and is equal to
$\sqrt{x\tr H^{-1} x}$. We denote by
$$\Pi_{\cX}^{H}\big(x\big) = \argmin_{x'\in\cX} \|x'-x\|_{H}$$
the projection of $x$ onto a bounded convex set $\cX$ with respect to the norm
induced by $H \in \cone$.  When $H = I$, the identity matrix, we omit the
superscript and simply use $\Pi_\cX$ to denote the typical Euclidean
projection operator.

Given a symmetric $d \times d$ matrix $A$ and a function
$\phi : \reals \mapsto \reals$, we define $\phi(A)$ as the
$d \times d$ matrix obtained by applying $\phi$ to the eigenvalues of $A$.
Formally, let us rewrite $A$ using its spectral decomposition,
$\smash{\sum_{i=1}^d \lambda_i u_i u_i\tr}$ where $\lambda_i,u_i$ are $A$'s $i$'th
eigenvalue and eigenvector respectively.  Then, we define
$\phi(A) = \smash{\sum_{i=1}^d \phi(\lambda_i) u_i u_i\tr}$. The function $\phi$ is
said to be \emph{operator monotone} if $A \succeq B \succeq 0$ implies that
$\phi(A) \succeq \phi(B)$. A classic result in matrix theory used in our
analysis is the L\"owner-Heinz Theorem (see, for instance
Theorem 2.6 in \citealp{carlen2010trace}), which in particular asserts that the function
$\phi(x) = x^\alpha$ is operator monotone for any $\alpha\in[0,1]$.
(Interestingly, it is not the case for $\alpha>1$.)
We also use an elementary identity from matrix calculus to compute derivatives of matrix traces:
$ \nabla_A \trace(\phi(A)) = \phi'(A)$.

\section{Unified Adaptive Regularization}
In this section we describe and analyze the meta-algorithm for Adaptive
Regularization (\ar). The pseudocode of the algorithm is given
in~\cref{alg:universal}. \ar constructs a succession of matrices $H_t$, each
multiplies its instantaneous gradient $g_t$. The matrices act as
pre-conditioners which reshape the gradient-based directions. In order to
construct the pre-conditioners \ar is provided with a potential function $\Phi
: \cH \mapsto \reals$ over a subset $\cH$ of the positive definite matrices.
On each round, $\Phi$ casts a trade-off involving two terms. The first term
promotes pre-conditioners which are inversely proportional to the accumulated
outer products of gradients, namely,
\begin{align} \label{Gt:eqn}
G_t &= G_0 + \sum_{s=1}^t g_s g_s\tr ~ .
\end{align}
The second term ``pulls'' back towards typically the zero matrix and is
facilitated by $\Phi$.  We define the initial regularizer
$H_0 = \min_{H \in \cH} \lrset{ G_0 \bullet H + \Phi(H)}$.

\begin{algorithm}[t]
\wrapalgo[0.90\textwidth]{
{\bf Parameters:}
	$\cH \subset \cone$,
	potential $\Phi : \cH \mapsto \reals$,
	vector $x_1 \in \cX$,
	and matrix $G_0 \in \cone$ \\
For $t=1,2,\ldots,T$:
\begin{enumerate}[nosep,label=(\arabic*)]
	\item Output: $x_t$; Receive: $f_t$
	\label[step]{alg-step1} \smallskip
	\item Compute: $g_t = \nabla f_t(x_t)$ and $G_t = G_{t-1} + g_t g_t\tr$
	\label[step]{alg-step2} \smallskip
	\item Calculate:
	$ H_t = \argmin_{H \in \cH} \Lrset{ G_t \bullet H + \Phi(H) } $
	\label[step]{alg-step3} \smallskip
	\item Update:
	$
	x_{t+1}
	=
	\smash{\proj_{\cX}^{H_t^*}\Lr{ x_t - H_t g_t }}
	$
	\label[step]{alg-step4}
\end{enumerate}
}
\caption{Adaptive regularization meta-algorithm.}
\label{alg:universal}
\end{algorithm}

We now state the main regret bound we prove for \cref{alg:universal}, from
which all the results in this paper are derived.
\begin{theorem} \label{thm:universal}
For any $x^\star \in \cX$ it holds that
\begin{align} \label{eq:universal}
\sum_{t=1}^T f_t(x_t) - \sum_{t=1}^T f_t(x^\star)
\le
\half \min_{H \in \cH} \lrset{ G_T \bullet H + \Phi(H) - \Phi(H_0)}
+
\half \sum_{t=1}^T \progr_t(x^\st)
,
\end{align}
where $\progr_t(x^\st) = \norm{x_t-x^\star}_{H_t^*}^2 - \norm{x_{t+1}-x^\star}_{H_t^*}^2$.
\end{theorem}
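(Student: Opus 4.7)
The plan is to combine the standard online mirror-descent template with a Follow-The-Leader (FTL) view of \cref{alg-step3}.

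First, by convexity of $f_t$, one has $f_t(x_t) - f_t(x^\star) \leq g_t\tr(x_t - x^\star)$, so it suffices to bound $\sum_t g_t\tr(x_t - x^\star)$. For the one-step inequality I would use the update in \cref{alg-step4} together with non-expansiveness of the projection $\proj_\cX^{H_t^*}$ in the $H_t^*$-norm (since $x^\star \in \cX$): $\|x_{t+1} - x^\star\|_{H_t^*}^2 \leq \|(x_t - H_t g_t) - x^\star\|_{H_t^*}^2$. Expanding the right-hand side in the inner product induced by $H_t^{-1}$, the cross term collapses to $-2 g_t\tr(x_t - x^\star)$ and the quadratic tail to $g_t\tr H_t g_t$. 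Rearranging yields
\[
g_t\tr(x_t - x^\star) \;\le\; \thalf\,\progr_t(x^\star) + \thalf\, g_t\tr H_t g_t,
\]
which, summed over $t$, contributes the $\half \sum_t \progr_t(x^\st)$ term in the theorem and leaves behind a ``stability'' term $\sum_t g_t\tr H_t g_t$ to be controlled.

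The crux of the proof is then to show $\sum_t g_t\tr H_t g_t \leq \min_{H \in \cH}\{G_T \bullet H + \Phi(H) - \Phi(H_0)\}$. The key observation is that \cref{alg-step3} is exactly the FTL iterate on the matrix game whose round-$t$ loss is the linear function $\ell_t(H) = g_t g_t\tr \bullet H$ (for $t \geq 1$), together with a fictitious ``round-$0$'' loss $\ell_0(H) = G_0 \bullet H + \Phi(H)$; one verifies directly that $H_t = \argmin_{H \in \cH} \sum_{s=0}^t \ell_s(H)$. Applying the Be-The-Leader lemma (a short induction on $T$), one obtains $\sum_{t=0}^T \ell_t(H_t) \leq \sum_{t=0}^T \ell_t(H)$ for every $H \in \cH$. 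Unfolding both sides, using $g_t\tr H_t g_t = g_t g_t\tr \bullet H_t$ and $G_T = G_0 + \sum_{t=1}^T g_t g_t\tr$, yields
\[
\sum_{t=1}^T g_t\tr H_t g_t \;\le\; G_T \bullet H + \Phi(H) - G_0 \bullet H_0 - \Phi(H_0),
\]
and dropping the nonpositive $-G_0 \bullet H_0$ term (both matrices lie in $\cone$, hence $G_0 \bullet H_0 = \trace(G_0 H_0) \geq 0$) produces exactly the bound stated. Minimizing over $H \in \cH$ and combining with the first paragraph finishes the argument.

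The one subtlety worth highlighting is the FTL packaging: one must absorb the potential $\Phi$ and the initialization $G_0$ into the round-$0$ loss $\ell_0$ so that the iterates $H_t$ are genuine prefix-sum minimizers over $\cH$, after which Be-The-Leader applies mechanically. Everything else --- the mirror-descent one-step expansion and the telescoping bookkeeping --- is routine.
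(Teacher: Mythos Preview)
Your proposal is correct and mirrors the paper's own proof: convexity reduces to bounding $\sum_t g_t\tr(x_t-x^\star)$, the one-step mirror-descent inequality yields $\half\sum_t\progr_t(x^\star)+\half\sum_t g_t\tr H_t g_t$, and the FTL--BTL lemma applied to $\ell_0(H)=G_0\bullet H+\Phi(H)$ and $\ell_t(H)=g_tg_t\tr\bullet H$ handles the latter sum exactly as you describe (including dropping $-G_0\bullet H_0\le 0$). The only cosmetic difference is that the paper derives the one-step bound via first-order optimality of the projected update (its \cref{cor:regret-md}) rather than projection non-expansiveness, but the two routes are equivalent here.
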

\noindent Note that
$$G_T \bullet H = \sum_{t=1}^T \norm{g_t}_H^2 + G_0 \bullet H ~.$$
That is, the regret of the algorithm is controlled by the magnitude of the
gradients measured by a norm $\norm{\cdot}_H$ which is, in some sense, the
best possible in hindsight: it is the one that minimizes the sum of the
gradients' norms plus a regularization term.  The regularization term, that
stems from the choice of the potential function $\Phi$, facilitates
an explicit trade-off in the resulting regret bound between minimizing the
gradients' norms with respect to $\norm{\cdot}_H$ and controlling the
magnitude of $\Phi(H) - \Phi(H_0)$. 
The second summation term in the regret bound measures the stability of the algorithm in choosing its regularization matrices: an algorithm that changes the matrices $H_t$ frequently and abruptly is thus unlikely to perform well.

By definition, the minimization on the right-hand side of
\cref{eq:universal} is attained at $H_T$, thus the bound of
\cref{thm:universal} can be rewritten as
\begin{align} \label{reregret:eqn}
\sum_{t=1}^T f_t(x_t) - \sum_{t=1}^T f_t(x^\star)
	& \le \half \Lr{ G_T \bullet H_T +  \Phi(H_T) - \Phi(H_0)} +
	\half \sum_{t=1}^T \progr_t(x^\st) ~ .
\end{align}
To prove \cref{thm:universal}, we rely on two standard tools in online
optimization. The first is the Follow-the-Leader / Be-the-Leader
(FTL-BTL) lemma.

\begin{lemma}[FTL-BTL Lemma, \citealp{kalai2005efficient}]
\label{lem:ftl-btl}
Let $\psi_0,\ldots,\psi_T : \cX \mapsto \reals$ be an arbitrary sequence of
functions defined over a domain $\cX$.  For $t \ge 0$, let
$x_t \in \argmin_{x \in \cX} \sum_{s=0}^t \psi_s(x)$,
then,
\begin{align*}
\sum_{t=1}^T \psi_t(x_t)
\le
\sum_{t=1}^T \psi_t(x_T) + (\psi_0(x_T) - \psi_0(x_0))
~ .
\end{align*}
(The term $\psi_0(\cdot)$ term is often used as regularization.)
\end{lemma}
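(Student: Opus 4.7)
The plan is to prove the slightly stronger ``Be-the-Leader'' statement
\[
\sum_{t=0}^T \psi_t(x_t) \le \sum_{t=0}^T \psi_t(x_T)
\]
by induction on $T$, and then simply peel off the $t=0$ term on each side to recover exactly the inequality in the lemma.

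For the induction, the base case $T=0$ is the trivial identity $\psi_0(x_0) = \psi_0(x_0)$. For the inductive step, I would first add $\psi_T(x_T)$ to both sides of the inductive hypothesis applied at $T-1$, obtaining
\[
\sum_{t=0}^T \psi_t(x_t) \le \sum_{t=0}^{T-1} \psi_t(x_{T-1}) + \psi_T(x_T).
\]
Then I invoke the defining optimality of $x_{T-1}$ as a minimizer of $\sum_{s=0}^{T-1} \psi_s$ over $\cX$, which gives $\sum_{t=0}^{T-1} \psi_t(x_{T-1}) \le \sum_{t=0}^{T-1} \psi_t(x_T)$. Chaining these two inequalities closes the induction.

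To recover the stated form, I separate the $t=0$ contribution on each side of the Be-the-Leader inequality, rewriting it as
\[
\psi_0(x_0) + \sum_{t=1}^T \psi_t(x_t) \le \psi_0(x_T) + \sum_{t=1}^T \psi_t(x_T),
\]
and rearrange so that $\psi_0(x_T) - \psi_0(x_0)$ appears as a boundary term on the right-hand side.

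There is no substantive obstacle here: the entire content is the one-line observation that $x_{T-1}$, being the minimizer of $\sum_{s=0}^{T-1}\psi_s$, must be at least as good as $x_T$ on those first $T$ losses. The only thing to watch is the bookkeeping of the $\psi_0$ term, so that the final rearrangement yields precisely the boundary expression $\psi_0(x_T)-\psi_0(x_0)$ rather than some other telescoping remnant; this is what formalizes the common intuition that $\psi_0$ plays the role of a regularizer in the bound.
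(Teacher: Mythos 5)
Your proposal is correct and matches the paper's own argument almost verbatim: both prove the Be-the-Leader inequality $\sum_{t=0}^T\psi_t(x_t)\le\sum_{t=0}^T\psi_t(x_T)$ by induction on $T$, using the optimality of $x_{T-1}$ on the first $T$ losses as the one substantive step, and then peel off the $t=0$ term to obtain the stated form. The only difference is the cosmetic ordering of the two moves in the inductive step (the paper chains the optimality inequality before adding $\psi_T(x_T)$, you add it first), which is immaterial.
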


The second tool is a standard bound for the Online Mirror Descent (OMD)
algorithm, that allows for a different mirror map on each step (e.g.,
\citealp{duchi2011adaptive}). The version of this algorithm relevant in the context of
this paper starts from an arbitrary initialization $x_0 \in \cX$ and makes
updates of the form,
\begin{align} \label{eq:update}
x_{t+1} &=
	\argmin_{x \in \cX} \Lrset{g_t \cdot x + \thalf\norm{x-x_t}_{H_t^*}^2} ~.
\end{align}
This update is equivalent to the one in \cref{alg-step4} of \cref{alg:universal},
as shown in the appendix.
\begin{lemma}
\label{cor:regret-md}
For any $x^\st \in \cX$,  $g_1,\ldots,g_T \in \reals^d$ and $H_1,\ldots,H_T
\in \cone$, if $x_t$ are provided according to \cref{eq:update}, the
following bound holds,
\begin{align*}
\sum_{t=1}^T g_t \cdot (x_t-x^\star)
\le
	\half \sum_{t=1}^T \progr_t(x^\st)
	+ \half \sum_{t=1}^T \norm{g_t}_{H_t}^2
~ .
\end{align*}
\end{lemma}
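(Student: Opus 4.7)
The plan is to prove the bound pointwise in $t$: I will derive an upper bound on $g_t \cdot (x_t - x^\star)$ of the form $\tfrac{1}{2}\|g_t\|_{H_t}^2 + \tfrac{1}{2}\Delta_t(x^\star)$, and then sum over $t$. This is the standard ``single-step analysis'' for mirror descent, but with the twist that the norm $\|\cdot\|_{H_t^*}$ may change from round to round; fortunately both terms in $\Delta_t(x^\star)$ use the \emph{same} matrix $H_t^*$, so no Bregman-divergence comparison across rounds is needed.

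First, I would exploit the first-order optimality condition for the constrained minimization defining $x_{t+1}$ in \cref{eq:update}. Since the objective $x \mapsto g_t \cdot x + \tfrac{1}{2}\|x-x_t\|_{H_t^*}^2$ is convex and $\cX$ is convex, the minimizer $x_{t+1}$ satisfies
\begin{align*}
\bigl(g_t + H_t^*(x_{t+1}-x_t)\bigr) \cdot (x^\star - x_{t+1}) \geq 0
\end{align*}
for every $x^\star \in \cX$. Rearranging gives
\begin{align*}
g_t \cdot (x_{t+1}-x^\star) \leq (x_t - x_{t+1})\tr H_t^* (x_{t+1}-x^\star).
\end{align*}

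Next, I split $g_t \cdot (x_t - x^\star) = g_t \cdot (x_t - x_{t+1}) + g_t \cdot (x_{t+1} - x^\star)$. The second summand is handled by the optimality inequality above together with the standard three-point identity
\begin{align*}
(x_t - x_{t+1})\tr H_t^* (x_{t+1}-x^\star) = \tfrac{1}{2}\norm{x_t-x^\star}_{H_t^*}^2 - \tfrac{1}{2}\norm{x_{t+1}-x^\star}_{H_t^*}^2 - \tfrac{1}{2}\norm{x_t-x_{t+1}}_{H_t^*}^2,
\end{align*}
which is just the parallelogram identity for the inner product induced by $H_t^*$. The first summand I bound by Fenchel--Young with respect to the dual pair $(\|\cdot\|_{H_t^*}, \|\cdot\|_{H_t})$:
\begin{align*}
g_t \cdot (x_t - x_{t+1}) \leq \tfrac{1}{2}\norm{g_t}_{H_t}^2 + \tfrac{1}{2}\norm{x_t-x_{t+1}}_{H_t^*}^2.
\end{align*}

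Adding these two bounds, the $\tfrac{1}{2}\|x_t-x_{t+1}\|_{H_t^*}^2$ terms cancel, leaving exactly
\begin{align*}
g_t \cdot (x_t - x^\star) \leq \tfrac{1}{2}\norm{g_t}_{H_t}^2 + \tfrac{1}{2}\progr_t(x^\st),
\end{align*}
and summing over $t=1,\ldots,T$ yields the claim. The only subtle point is making sure the optimality inequality is applied in the constrained setting (which gives the ``$\geq 0$'' form rather than equality), but this is standard and is the main step; everything else is algebraic manipulation with the $H_t^*$-inner product and a one-line Fenchel--Young application.
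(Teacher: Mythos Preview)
Your proof is correct and follows essentially the same route as the paper: first-order optimality for the constrained update, the three-point identity in the $H_t^*$-inner product, and the dual-norm bound (which the paper phrases as H\"older plus $ab \le \tfrac12(a^2+b^2)$ rather than Fenchel--Young) to control $g_t\cdot(x_t-x_{t+1})$, after which the $\tfrac12\norm{x_t-x_{t+1}}_{H_t^*}^2$ terms cancel and one sums over $t$.
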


For completeness, the proofs of both lemmas are given in \cref{tech:app}. We
now proceed with a short proof of the theorem.
\begin{proof}[Proof of \cref{thm:universal}]
From the convexity of $f_t$, it follows that
$f_t(x_t) - f_t(x^\st) \le g_t\cdot(x_t - x^*)$.  We thus get,
\begin{align*}
\sum_{t=1}^T f_t(x_t) - \sum_{t=1}^T f_t(x^\st)
\le \sum_{t=1}^T g_t\cdot(x_t - x^\st) .
\end{align*}
Hence, to obtain the claim from \cref{cor:regret-md} we need to show that
\begin{align*}
	\sum_{t=1}^T \norm{g_t}_{H_t}^2
		&\le G_T \bullet H_T + \Phi(H_T) - \Phi(H_0)~ .
\end{align*}
To this end, define functions $\psi_0,\psi_1,\ldots,\psi_T$ by setting $\psi_0(H) = G_0 \bullet H + \Phi(H)$, and
\begin{align*}
\psi_t(H) = g_t g_t\tr \bullet H ~
\end{align*}
for $t\ge 1$.
Then, by definition, $H_t$ is a minimizer of
$\sum_{s=0}^t \psi_s(H)$  over matrices $H \in \cH$.
\cref{lem:ftl-btl}
for the functions $\psi_0,\psi_1,\ldots,\psi_T$ now yields
$$
\sum_{t=1}^T \psi_t(H_t)
\le
\sum_{t=1}^T \psi_t(H_T) + \psi_0(H_T) - \psi_0(H_0)
~ .
$$
Expanding the expressions for the $\psi_t$, we get
\begin{align*}
\sum_{t=1}^T \norm{g_t}_{H_t}^2
&\le\ 
\sum_{t=1}^T \norm{g_t}_{H_T}^2 + \Phi(H_T) + G_0 \bullet H_T
  -\Phi(H_0) - G_0\bullet H_0\\
&= G_T \bullet H_T + \Phi(H_T) -\Phi(H_0) - G_0\bullet H_0\\
&\le\ 
G_T \bullet H_T + \Phi(H_T)  - \Phi(H_0)~ .&&\qedhere
\end{align*}
\end{proof}

\subsection{Spectral regularization}
As we show in the sequel, the potential $\Phi$ will often have the
form $\Phi(H) = -\trace(\phi(H))$, where $\phi : \reals^+ \mapsto
\reals$ is a (scalar) monotonically increasing function with a
positive first derivative.  We call this a {\em spectral potential}.
In this case $\nabla \Phi(H) = -\phi'(H)$, and further, if $\cH =
\cone$ then \cref{alg-step2} of the algorithm becomes
\begin{align} \label{eq:spectral}
H_t = (\phi')^{-1}(G_t).
\end{align}
Hence, the derivation of concrete algorithms from the general framework becomes extremely simple for spectral potentials and amounts to a simple transformation of the eigenvalues of the matrix $G_t$.
Furthermore, as we discuss below, spectral potentials make the
derivation of simplified diagonal (and scalar) versions of the
algorithms a straightforward task.

\subsection{Diagonal regularization}
To obtain a diagonal version of \cref{alg:universal}, i.e., a version in which
the maintained matrices $H_t$ are restricted to be diagonal, the only
modification required in the algorithm is to set $\cH$ to be the set of all
positive definite \emph{diagonal} matrices, denoted $\coneD$.  Specifically,
when $\Phi(H) = -\trace(\phi(H))$ is a spectral potential and $\cH = \coneD$,
then \cref{alg-step2} of the algorithm becomes
\begin{align} \label{eq:spectral-diag}
H_t
=
(\phi')^{-1}\Lr{ \!\diag(G_t) }
.
\end{align}
Indeed, for a diagonal $H$ we have
$G_t \bullet H - \Phi(H) = \diag(G_t) \bullet H - \Phi(H)$, and the
minimizer of the latter over all positive
definite matrices, according to \cref{eq:spectral}, is the matrix
$(\phi')^{-1}(\diag(G_t))$. Since the latter is a diagonal matrix, it is also
the minimizer of $G_t \bullet H - \Phi(H)$ over all \emph{diagonal} positive
definite matrices.
Consequently, diagonal versions of adaptive algorithms are obtained by
replacing the full matrix $G_t$ in \cref{alg:universal} with its
diagonal counterpart $\diag(G_t)$. 
$H_t = (\phi')^{-1}(\wt{G}_t)$ instead of $H_t = (\phi')^{-1}(G_t)$.  In
\cref{sec:adagrad-diag,sec:ons-diag} below, we spell out how this is
accomplished for the AdaGrad and ONS algorithms.

\subsection{Isotropic regularization}
To obtain the corresponding scalar versions of \cref{alg:universal} (namely,
analogous algorithms that only adaptively maintain a single scalar step-size),
we can modify the algorithm so that $H$ is optimized over the set $\coneI =
\set{s I : s>0}$ of all positive multiples of the identity matrix.
If we let $\Phi(H) = -\trace(\phi(H))$ be a spectral potential and let $\cH = \coneI$, then the update in \cref{alg-step2} of the algorithm is equivalent to
\begin{align} \label{eq:spectral-scalar}
H_t
=
(\phi')^{-1}\Lr{ \tfrac{1}{d}\trace(G_t) I }
.
\end{align}
To see this, note that for $H \in \coneI$ we have
$$G_t \bullet H + \Phi(H) = \tfrac{1}{d}\trace(G_t) I \bullet H + \Phi(H) ~.$$
Since the minimizer of the latter over all positive definite matrices is
$(\phi')^{-1}(\tfrac{1}{d}\trace(G_t) I) \in \coneI$, it is also the minimizer
over $\coneI$.%
\footnote{We note that we could have arrived at the same result by choosing the potential $\Phi(H) = \norm{H^{-1}}_2$ and minimizing over $\cH = \cone$.
However, notice that this is not a spectral potential and its analysis is more technically involved.
}

See \cref{sec:adagrad-scalar,sec:ons-scalar} on how scalar versions of the
AdaGrad and Online Newton Step algorithms are obtained by means of this
simple technique. 

\section{\ar $\Rightarrow$ AdaGrad} \label{sec:adagrad}
We now derive AdaGrad~\citep{duchi2011adaptive} from the \ar
meta-algorithm. We first describe how to obtain the full-matrix version of the
algorithm. In~\cref{sec:adagrad-diag} we provide the derivation of
AdaGrad's diagonal version.
Finally, in~\cref{sec:adagrad-scalar} we show that the
well-studied adaptive version of online gradient descent can be viewed, and
derived based on our framework, as a scalar version of AdaGrad.
The three versions employ a potential parameterized by
$\eta > 0$,
\begin{align} \label{eq:phi-adagrad}
\PhiAG(H) = \eta^2 \trace(H^{-1}) ~,
\end{align}
and differ by the domain $\cH$ of admissible matrices $H$.
Since $\PhiAG$ is a spectral potential, as we can rewrite,
$\PhiAG(H) = -\trace(\phi(H))$ for $\phi(x) = -\eta^2 x^{-1}$.
Simple calculus yields that $(\phi')^{-1}(y) = \eta y^{-1/2}$, which
in turn gives that
\begin{align} \label{eq:argmin-adagrad}
\argmin_{H \succ 0} \Lrset{ H \bullet G + \PhiAG(H) } = \eta G^{-1/2} ~ .
\end{align}

\subsection{Full-matrix AdaGrad}
AdaGrad employs the following update on each iteration,
\begin{align*}
x_{t+1} = \proj_\cX^{G_t^{1/2}} \! \Lr{ x_t - \eta G_t^{-1/2} g_t } ~,
\tag{AdaGrad}
\end{align*}
where $g_t = \nabla f_t(x_t)$, $G_t = \eps I + \sum_{s=1}^t g_s g_s\tr$ for
all $t\ge0$, and $\eta > 0$ is the step-size parameter. In the analysis
below, we only assume that the domain $\cX$ is bounded and its Euclidean
diameter is bounded by $\diam=\max_{x,x'\in\cX} \|x-x'\|$.

To obtain AdaGrad from \cref{alg:universal}, we choose the potential function
$\PhiAG$ over the domain $\cH = \cone$ and set $G_0 = \eps I$.  The values of
the parameters $\eta$ and $\eps$ are determined in the sequel.  According to
\cref{eq:argmin-adagrad}, the norm-regularization matrices used by
\cref{alg:universal} are indeed $H_t = \eta G_t^{-1/2}$, the same used by
AdaGrad.
Note that for projecting back to the domain $\cX$, we can use a
projection with respect to the norm $\norm{\cdot}_{G_t^{1/2}}$ rather than
$\norm{\cdot}_{H_t^\star}$. Since the two norms only differ by a scale, this
difference has no effect on the projection step.
We now invoke \cref{thm:universal} and bound the second term of the
bound in \cref{eq:universal},
\begin{align*}
\sum_{t=1}^T \progr_t(x^\st)
&=
\frac{1}{\eta} (x_1-x^\star)\tr G_1^{1/2} (x_1-x^\star)
+
\frac{1}{\eta} \sum_{t=2}^T (x_{t}-x^\star)\tr (G_t^{1/2}-G_{t-1}^{1/2}) (x_t-x^\star) ~.
\end{align*}
We bound the left term using
$v\tr M v \le \norm{M} \norm{v}^2 \le \trace(M) \norm{v}^2$ for a matrix $M \succeq 0$ and a vector~$v$. 
Setting $v = x_1 - x^\star$ and recalling
that the diameter of $\cX$ is bounded by $\diam$, we get
\begin{align*}
\frac{1}{\eta} (x_1-x^\star)\tr G_1^{1/2} (x_1-x^\star) & \le
	\frac{\diam^2}{\eta} \trace\lr{ G_1^{1/2} } ~ .
\end{align*}
To bound the right term above we can use the same technique. We need to show
though that $G_t^{1/2}-G_{t-1}^{1/2} \succeq 0$ for all $t$. Indeed, the
difference is PSD since $G_t \succeq G_{t-1}$ and $x \mapsto x^{1/2}$ is
operator monotone. We thus get,
\begin{align*}
(x_{t}-x^\star)\tr (G_t^{1/2}-G_{t-1}^{1/2}) (x_t-x^\star) & \le
\frac{\diam^2}{\eta} \trace\Lr{ G_t^{1/2}-G_{t-1}^{1/2} } ~ .
\end{align*}
Combining the two bounds we get,
\begin{align*}
\sum_{t=1}^T \progr_t(x^\st)
&\le
\frac{\diam^2}{\eta} \trace(G_1^{1/2}) +
\frac{\diam^2}{\eta} \sum_{t=2}^T \trace(G_t^{1/2}-G_{t-1}^{1/2})
= \frac{\diam^2}{\eta} \trace(G_{T}^{1/2}) ~.
\end{align*}
Since $G_T \bullet H_T = \eta \trace(G_T G_T^{-1/2}) = \eta \trace(G_T^{1/2})$,
together with $\Phi(H_T) = \eta \trace(G_T^{1/2})$ and a choice of
$\eta = \diam/\!\sqrt{2}$, we have
\begin{align*}
\sum_{t=1}^T f_t(x_t) - \sum_{t=1}^T f_t(x^\star)
\le
\lr{\eta + \frac{\diam^2}{2\eta}} \, \trace(G_{T}^{1/2})
= \sqrt{2}\,\diam \trace(G_{T}^{1/2}) ~,
\end{align*}
for any $x^\star \in \cX$. Note that $\eps$ can be taken arbitrarily small.

\subsection{Diagonal AdaGrad}
\label{sec:adagrad-diag}
\citet{duchi2011adaptive} presented a diagonal version of AdaGrad that uses
faster updates based on diagonal regularization matrices,
\begin{align*}
x_{t+1}
=
\proj_\cX^{\wt{G}_t^{1/2}} \Lr{ x_t - \eta \wt{G}_t^{-1/2} g_t } ~,
\tag{Diag AdaGrad}
\end{align*}
where $g_t = \nabla f_t(x_t)$ and $\wt{G}_t = \eps I + \diag(\sum_{s=1}^t g_s g_s\tr)$
for all $t$. Following~\citep{duchi2011adaptive}, in the analysis of
the diagonal algorithm we will assume a bound on the diameter of $\cX$ with
respect to the $\infty$-norm, which we denote by $\diam_\infty$.

In order to obtain the diagonal version of AdaGrad we choose the same
potential $\PhiAG$, but optimize over a domain $\cH$ restricted to
diagonal positive definite matrices.  From \cref{eq:spectral-diag} and
\cref{eq:argmin-adagrad}, the induced regularization matrices are
$\wt{H}_t = \eta \wt{G}_t^{-1/2}$, which recovers the diagonal version
of AdaGrad.

Invoking \cref{thm:universal} and repeating the arguments for
Full-matrix AdaGrad with
$\wt{H}_t$ replacing $H_t$, we obtain that
\begin{align*}
\sum_{t=1}^T \progr_t(x^\st)
&=
\frac{1}{\eta} (x_1-x^\star)\tr \wt{G}_1^{1/2} (x_1-x^\star)
+
\frac{1}{\eta} \sum_{t=2}^T (x_{t}-x^\star)\tr (\wt{G}_t^{1/2}-\wt{G}_{t-1}^{1/2}) (x_t-x^\star)
\\
&\le
\frac{\diam_\infty^2}{\eta} \trace(\wt{G}_1^{1/2}) +
\frac{\diam_\infty^2}{\eta} \sum_{t=2}^T \trace(\wt{G}_t^{1/2}-\wt{G}_{t-1}^{1/2})
=
\frac{\diam_\infty^2}{\eta} \trace(\wt{G}_{T}^{1/2}) ~ ,
\end{align*}
where the inequality uses the fact that for a diagonal positive semidefinite
matrix $A$, it holds that $v\tr A v \le \norm{v}_\infty^2 \trace(A)$ for any
vector $v$.
Furthermore, for the second sum in \cref{eq:universal} we have
$$G_T \bullet H_T =
	\eta \trace\left(G_T \bullet \wt{G}_T^{-1/2}\right) =
	\eta \trace(\wt{G}_T^{1/2}) ~ ,$$
where we used the elementary yet constructive fact that the support of
non-zeros of the product $G_T \bullet \wt{G}_T^{-1/2}$ is the same as
$\wt{G}_T \bullet \wt{G}_T^{-1/2}$.
Overall, with the choice of $\eta = \diam_\infty/\sqrt{2}$ we obtain the regret bound
\begin{align*}
\sum_{t=1}^T f_t(x_t) - \sum_{t=1}^T f_t(x^\star)
\le
\sqrt{2}\diam_\infty \trace\Lr{\wt{G}_{T}^{1/2}} ~ .
\end{align*}

\subsection{Isotropic AdaGrad: Adaptive Gradient Descent}
\label{sec:adagrad-scalar}
A classic adaptive version of the Online Gradient Descent (OGD) algorithm
uses standard projected gradient updates of the form
\begin{align*}
x_{t+1} = \proj_\cX \lr{ x_t - \eta_t g_t } \tag{Adaptive OGD} ~,
\end{align*}
with the decreasing step-size policy
$ \eta_t = \smash{c / \sqrt{ \tsum_{s \le t} \norm{g_s}^2 }} $
for an appropriate constant $c>0$.
For simplicity, we make the mild assumption that $\norm{g_1} > 0$ to avoid
degenerate cases.

We now show how the adaptive OGD algorithm is obtained from our framework as a
scalar version of AdaGrad.  To establish this, consider the potential $\PhiAG$
and fix the domain $\cH = \coneI = \set{s I : s > 0}$ to be the set of all positive
multiples of the identity matrix. Let us also set $G_0 = 0$. Recalling
\cref{eq:spectral-scalar}, the resulting regularization matrices used by
\cref{alg:universal} are $H_t = \eta (\tfrac{1}{d} \trace(G_t))^{-1/2} \Id$.
By setting $\eta = c/\sqrt{d}$ we get,
\begin{align*}
H_t g_t = \frac{c}{ \sqrt{ \tsum_{s=1}^t \norm{g_s}^2 } } g_t = \eta_t g_t ~ ,
\end{align*}
recovering the adaptive OGD algorithm.  In order to obtain a regret bound for the Isotropic AdaGrad algorithm, we can
apply \cref{thm:universal} and repeat the arguments for Full-matrix
AdaGrad. First, we have
\begin{align*}
\sum_{t=1}^T
\progr_t(x^\st) &=
\frac{1}{c} \sqrt{\trace(G_1)} \norm{x_1-x^\star}^2
+
\frac{1}{c}
\sum_{t=2}^T
	\LR{ \sqrt{\trace(G_t)}-\sqrt{\trace(G_{t-1})} } \norm{x_t-x^\star}^2
\le
\frac{\diam^2}{c} \sqrt{\trace(G_T)} ~ .
\end{align*}
We also use the following,
\begin{align*}
G_T\bullet H_T &+ \PhiAG(H_T) - \PhiAG(H_0)
\\
&=
c \trace(G_T)^{-1/2}\trace(G_T) + c \trace(G_T)^{1/2} - c \trace(G_0)^{1/2}\\
&\le
2 c \sqrt{\trace(G_T)}
.
\end{align*}
Overall, with the choice of $c = \diam/\sqrt{2}$ we obtain the regret bound
\begin{align*}
\sum_{t=1}^T f_t(x_t) - \sum_{t=1}^T f_t(x^\star)
\le
\diam \sqrt{2\trace(G_{T})}
=
\sqrt{2}\diam \; \lr{\sum_{t=1}^T \norm{g_t}^2}^{1/2} ~ .
\end{align*}

\subsection{A $\bm{p}$-norm extension of AdaGrad}
We conclude this section with a simple spectral extension of AdaGrad
which regularizes according to the $p$-norm of the spectral coefficients.
To do so, let us choose
$$\Phi_p(H) = \frac{\eta^{p+1}}{p} \trace\lr{H^{-p}} ~.$$
We then have,
$\Phi_p(H) = -\trace(\phi(H))$ where $\phi(x) = -(\eta^{p+1}/p) x^{-p}$,
and therefore $\Phi_p(H)$ is a spectral potential. Elementary calculus yields
$(\phi')^{-1}(y) = \eta y^{-1/(p+1)}$, which in turn gives
\begin{align} \label{eq:argmin-pnorm}
\argmin_{H \succ 0} \Lrset{ H \bullet G + \Phi_p(H) } =
	\eta G^{-1/(p+1)} ~~ .
\end{align}
We can use~\cref{thm:universal} as before to obtain the following regret
bound,
\begin{align*}
\sum_{t=1}^T f_t(x_t) - \sum_{t=1}^T f_t(x^\star)
\le
\frac{1}{2\eta} \diam^2\trace\Lr{G_{T}^{1/(p+1)}} +
	\eta\tfrac{p+1}{2p} \trace\Lr{G_T^{p/(p+1)}} ~ .
\end{align*}
We now set
$$\eta = \diam \sqrt{ \frac{p}{p+1}
  \trace\Lr{G_{T}^{1/(p+1)}} / \trace\Lr{G_T^{p/(p+1)}}} ~,$$
and obtain
\begin{align} \label{pbound:eqn}
\sum_{t=1}^T f_t(x_t) - \sum_{t=1}^T f_t(x^\star)
\le
\diam\sqrt{\tfrac{p+1}{p}
	\trace\Lr{G_T^{1/(p+1)}}
	\trace\Lr{G_T^{p/(p+1)}}} ~ .
\end{align}
Setting $p=1$ yields the AdaGrad update with the same regret bound
obtained above. Moreover, the choice of $p=1$ provides
the best regret bound among all choices for $p$. To see
that, let us denote the eigenvalues of $G_T$ by $\lambda_1,\ldots,\lambda_d \ge 0$.
The product of the two traces in \cref{pbound:eqn} amounts to
$$
\trace\Lr{G_{T}^{1/(p+1)}} \trace\Lr{G_T^{p/(p+1)}}
=
\LR{\sum \lambda_i^{1/(p+1)}} \LR{\sum_i \lambda_i^{p/(p+1)}}
\ge 
\LR{\sum \lambda_i^{1/2}}^2
=
\trace\Lr{G_{T}^{1/2}}^2
~,$$
by the Cauchy-Schwarz inequality.%
\footnote{We note, however, that the worst case analysis does not necessarily transfer to actual performance on real problems, and choosing a $p$-norm regularization with $p\neq 1$ may prove useful in practice.}

\section{\ar $\Rightarrow$ Online Newton Step}
\label{sec:ons}
We now show how to derive the Online Newton Step (ONS) algorithm
of~\citep{hazan2007logarithmic} from \cref{alg:universal}.
The ONS update takes the following form,
\begin{align*}
x_{t+1} =
	\proj_\cX^{G_t} \Lr{ x_t - \eta G_t^{-1} g_t } ~ ,
\tag{ONS}
\end{align*}
where as before $g_t = \nabla f_t(x_t)$ and
	$G_t = \eps I + \sum_{s=1}^t g_s g_s\tr$ for all $t\ge 0$.
Here again $\eta$ is a fixed step-size.
Throughout this section, we assume that the $f_t$ are $\gamma$-Lipschitz,
namely, $|f_t(x) - f_t(y)| \le \gamma \norm{x - y}$ for $x,y\in\cX$ and
the domain's diameter, $\max_{x,x'\in\cX} \norm{x-x'}\le\diam$, both with
respect to the Euclidean norm.

\subsection{Full-matrix ONS}
Let us first describe how the full-matrix version of ONS is derived
through a specific choice for potential, $\PhiONS(H)$. We assume that
the cost functions $f_1,\ldots,f_T$ are $\beta$-exp-concave over the
domain $\cX \subseteq \reals^d$, with the following minor abuse of
terminology. Concretely, we assume that for all $t\ge1$ and $x,y \in \cX$,
\begin{align} \label{eq:expcon}
f_t(x) - f_t(y) \le
	\nabla f_t(x) \cdot (x-y) - \frac{\beta}{2}
		\Lr{ \nabla f_t(x) \cdot (x-y) }^2 ~ .
\end{align}
We refer the reader to~\citep{hazan2007logarithmic,hazan2016introduction} for
further background on exp-concavity and its precise definition.%

To obtain the ONS update, we use the potential,
\begin{align*}
\PhiONS(H) = -\frac{1}{\beta} \log\det{H}
~ ,
\end{align*}
over $\cH = \cone$ and choose a fixed $G_{0} = \eps I$ for some $\eps > 0$.
Since $\PhiONS(H)$ is equal to $-\trace(\phi(H))$
where $\phi(x) = \beta^{-1} \log{x}$, $\PhiONS$ is a spectral potential.
We get that $$(\phi')^{-1}(z) = (\beta z)^{-1} ~,$$ and thus
\begin{align} \label{eq:argmin-ons}
\argmin_{H \succ 0} \Lrset{ H \bullet G + \PhiONS(H) }
	= \frac{1}{\beta} G^{-1} ~ .
\end{align}
Therefore, the pre-conditioning matrices induced by the potential $\PhiONS$ in
\cref{alg:universal} are $H_t = (1/\beta)G_t^{-1}$, which gives rise to the
update rule used by ONS.

We proceed to analyze the regret of ONS by means of \cref{thm:universal}.
Bounding from above the terms of the right-hand side of~\cref{eq:universal},
we have that
\begin{align*}
\sum_{t=1}^T \progr_t(x^\st)
&=
\beta (x_1-x^\star)\tr G_1 (x_1-x^\star)
+
\beta \sum_{t=2}^T (x_{t}-x^\star)\tr (G_t-G_{t-1}) (x_t-x^\star)
\\
&=
\beta (x_1-x^\star)\tr G_0 (x_1-x^\star)
+
\beta \sum_{t=1}^T (x_{t}-x^\star)\tr (G_t-G_{t-1}) (x_t-x^\star) \\
&\le
\eps \beta \diam^2
+
\sum_{t=1}^T \beta \Lr{g_t \cdot (x_{t}-x^\star)}^2
~ ,
\end{align*}
since $G_t - G_{t-1} = g_t g_t\tr$.
In addition we have,
$G_T \bullet H_T = \frac{1}{\beta} \trace\Lr{ G_T^{-1} G_T } = \frac{d}{\beta}$.
Let us denote the eigenvalues of
$G_T - G_0 = \sum_{t=1}^T g_t g_t\tr$ by
$\lambda_1,\ldots,\lambda_d \ge 0$.
Then, the eigenvalues of $G_T$ are
$\lambda_1+\eps,\ldots,\lambda_d+\eps$ and $\PhiONS(H_T)$ is
bounded as follows,
\begin{align*}
\PhiONS(H_T) - \PhiONS(H_0) 
&= \PhiONS\left(\frac{1}{\beta}
  G_T^{-1}\right) - \PhiONS\left(\frac{1}{\beta}
  G_0^{-1}\right)\\
&=
\frac{1}{\beta}\log\frac{\det{G_T}}{\det{G_0}}
=
\frac{1}{\beta} \sum_{i=1}^d \log\lr{1 + \frac{\lambda_i}{\eps}}
\\
&\le
\frac{d}{\beta} \log\lr{1+\frac{\gamma^2 T}{\eps}} ~ .
\end{align*}
The inequality above stems from to the fact that the eigenvalues
$\lambda_i$ of $G$ are upper bounded by $\gamma^2 T$, since
$\norm{g_t}^2 \le \gamma^2$ for all $t$ due to the $\gamma$-Lipschitz
assumption. In order to put everything together let us define
$\tilde{f}_t(x) = g_t \cdot x$ and apply Theorem~\cref{eq:universal}
to $\tilde{f}_t$, we obtain
\begin{align*}
\sum_{t=1}^T \left(f_t(x_t) - f_t(x^\star)\right) & \le
\sum_{t=1}^T \Lr{ g_t \cdot (x_t-x^\star) - \tfrac{\beta}{2}
	\Lr{ g_t \cdot (x_t-x^\star) }^2 } \tag{From \cref{eq:expcon}}\\
&=
\sum_{t=1}^T \Lr{ \tilde{f}_t(x_t) - \tilde{f}_t(x^\star) } -
\sum_{t=1}^T \tfrac{\beta}{2} \Lr{ g_t \cdot (x_t-x^\star) }^2
	\tag{Definition of $\tilde{f}_t$}\\
&\le \half \sum_{t=1}^T \Lr{\progr_t(x^\st) -
		\beta \Lr{ g_t \cdot (x_t-x^\star) }^2} \\
& \hspace{1cm} +
	\half \Lr{G_T \bullet H_T + \PhiONS(H_T) - \PhiONS(H_0)}
	\tag{\cref{thm:universal}} \\
&\le \frac{\eps \beta \diam^2}{2} + \frac{d}{2\beta}
	\Lr{1 + \log\lr{1+{\gamma^2 T}/\eps}} ~ .
\end{align*}
Taking $\eps = d/(\beta^2 \diam^2)$ gives the regret bound
\begin{align*}
\sum_{t=1}^T f_t(x_t) - \sum_{t=1}^T f_t(x^\star)
\le
\frac{d}{\beta}\lr{1 + \log\lr{\frac{(\beta \gamma \diam)^2 T}{d}}} ~ .
\end{align*}

\subsection{Diagonal ONS}
\label{sec:ons-diag}
We now turn to develop a diagonal version of ONS.
We will show that the diagonal version guarantees $O(d\log{T})$ regret under a coordinate-wise analogue of the exp-concavity property.
Formally (and slightly abusing terminology again), we say that the function $f_t$ is $\beta$-coordinate-wise exp-concave over a domain $\cX$ if the following holds for all points $x,y \in \cX$:
\begin{align} \label{eq:expcon-coo}
f_t(x) - f_t(y)
\le
\nabla f_t(x) \cdot (x-y)
- \frac{\beta}{2} (\nabla f_t(x))^2 \cdot (x-y)^2 
.
\end{align}
Here, we use the shorthand $u^2 = (u_1^2,\ldots,u_d^2)$ for vectors $u \in
\reals^d$.  In general, coordinate-wise exp-concavity is not comparable to
standard exp-concavity. Namely, if $f_t$ satisfies \cref{eq:expcon} then it
may not satisfy \cref{eq:expcon-coo} and vice versa.  Nonetheless, under
either property one can obtain a $O(d\log{T})$ regret using a corresponding
algorithm.

To obtain the diagonal ONS algorithm, we use the same potential $\PhiONS$ and
restrict $\cH$ to be the set of all {\em diagonal} positive definite matrices.
Then, letting $\wt{G}_t = \diag(G_t)$, \cref{eq:spectral-diag} implies that
the induced regularization matrices in \cref{alg:universal} are of the form
$\wt{H}_t = (1/\beta)\wt{G}_t^{-1}$, which gives rise to the following update
rule,
\begin{align*}
x_{t+1}
=
\proj_\cX^{\wt{G}_t} \Lr{ x_t - \tfrac{1}{\beta} \wt{G}_t^{-1} g_t }
\tag{Diagonal ONS}
~ \mbox{ where } ~ \wt{G}_t = \eps I + \diag\LR{\sum_{s=1}^t g_s g_s\tr} ~.
\end{align*}
We repeat the derivation for Full-matrix ONS with diagonal matrices
$$H_t = (1/\beta)\wt{G}_t^{-1} = (1/\beta) \diag(G_t)^{-1} ~,$$ and get,
\begin{align*}
\sum_{t=1}^T \progr_t(x^\st)
&=
\beta (x_1-x^\star)\tr \diag(G_1)\, (x_1-x^\star) +
\beta \sum_{t=2}^T (x_{t}-x^\star)\tr \diag(G_t-G_{t-1})\, (x_t-x^\star) \\
&\le \eps \beta \diam^2 +
	\beta \sum_{t=1}^T \sum_{i=1}^d g_{t,i}^2 (x_{t,i}-x^\star_i)^2 ~.
\end{align*}
The bound on $G_T\bullet H_T$ amounts to,
\begin{align*}
G_T\bullet H_T &= \frac{1}{\beta}
	\trace\Lr{ \diag(G_T)^{-1} G_T } = \frac{d}{\beta} ~.
\end{align*}
Finally, we bound the difference in the potential of $H_0$ and $H_T$ as
follows,
\begin{align*}
\PhiONS(H_T) &- \PhiONS(H_0) =
\frac{1}{\beta} \log{\frac{\det{\diag(G_T)}}{\det{\diag(G_0)}}}
\le
\frac{d}{\beta} \log\lr{1+\frac{\gamma^2 T}{\eps}}
~ .
\end{align*}
In summary, using \cref{thm:universal} we obtain the bound,
\begin{align*}
\sum_{t=1}^T \sum_{i=1}^d \LR{ g_{t,i} (x_{t,i}-x^\star_i) -
	\tfrac{\beta}{2} g_{t,i}^2 (x_{t,i}-x^\star_i)^2 } \le
\frac{\eps \beta \diam^2}{2} +
	\frac{d}{2\beta}\lr{1 + \log\lr{1+\frac{\gamma^2 T}{\eps}}} ~ .
\end{align*}
For $\beta$-coordinate-wise exp-concave functions, as defined above,
the left-hand side upper-bounds the regret.
Taking $\eps = d/(\beta^2 \diam^2)$ gives, for any
$x^\star \in \cX$, the regret bound
\begin{align*}
\sum_{t=1}^T f_t(x_t) - \sum_{t=1}^T f_t(x^\star)
=
\frac{d}{\beta}\lr{1 + \log\frac{(\beta \gamma \diam)^2 T}{d} }
.
\end{align*}

\subsection{Isotropic ONS: Strongly-convex OGD}
\label{sec:ons-scalar}
Finally, we derive a scalar version of the ONS algorithm and show that it
yields an adaptive version of the Online Gradient
Descent (OGD) algorithm in the strongly convex
case~\citep{hazan2007logarithmic, shalev2009mind}.
This version performs the following update,
\begin{align*}
x_{t+1}
=
\proj_\cX \lr{ x_t - \eta_t g_t }
\tag{Strongly Convex OGD}
\end{align*}
with $\eta_t = \Theta(1/t)$.
If the functions $f_t$ are $\alpha$-strongly convex, namely they satisfy
\begin{align} \label{eq:strconv}
f_t(x) - f_t(y)
\le
\nabla f_t(x) \cdot (x-y) - \tfrac{\alpha}{2} \norm{x-y}^2
,
\qquad\qquad
\forall ~ x,y \in \cX
,
\end{align}
then such an algorithm can achieve $O(\log{T})$ regret
\citep{hazan2007logarithmic,shalev2009mind}. To obtain and analyze this
version, we yet again use the potential $\PhiONS$ and restrict the domain
$\cH = \coneI = \set{s I : s > 0}$.  In addition, we set $G_0 = (\eps/d) I$ where
$\eps$ is determined below.
From~\cref{eq:spectral-scalar}, we see that \cref{alg:universal}
uses the regularization matrices $H_t = \frac{d}{\beta\trace(G_t)} I$.
Since $\trace(G_t) = \eps+\sum_{s=1}^t \norm{g_s}^2$, the resulting update
is of the same form as OGD with an adaptive learning rate of,
\begin{align*}
\eta_t = \frac{d/\beta}{\eps+\sum_{s=1}^t \norm{g_s}^2} ~ .
\end{align*}
It is evident that $\eta_t$ roughly decays at a rate of $1/t$. 

Applying \cref{thm:universal} with
$H_t = \frac{d}{\beta\trace(G_t)} I$ and repeating the same calculations
above, we have
\begin{align*}
\sum_{t=1}^T \progr_t(x^\st)
&=
\frac{\beta}{d} \trace(G_0) \norm{x_1-x^\star}^2 +
\frac{\beta}{d} \sum_{t=1}^T \norm{g_t}^2 \, \norm{x_t-x^\star}^2 \\
&\le
\frac{\eps\beta}{d} \norm{x_1-x^\star}^2 + \frac{\beta \gamma^2}{d}
	\sum_{t=1}^T \norm{x_t-x^\star}^2 ; \\
G_T\bullet H_T &+ \PhiONS(H_T) - \PhiONS(H_0)
=
\frac{d}{\beta}
+ \frac{1}{\beta} \log\frac{(\trace(G_T))^d}{(\trace(G_0))^d}
\\
&\le
\frac{d}{\beta} \lr{ 1 + \log\lr{1 + \frac{\gamma^2 T}{\eps}} } ~
.
\end{align*}
Overall, we obtain
\begin{align*}
\sum_{t=1}^T
	\LR{ g_t \cdot (x_t-x^\star) - \frac{\beta\gamma^2}{2d} \norm{x_t-x^\star}^2 }
\le
\frac{\eps\beta}{2d} \norm{x_1-x^\star}^2 + \frac{d}{2\beta} \lr{ 1 +
  \log\lr{1 + \frac{\gamma^2 T}{\eps} }} ~ ,
\end{align*}
Setting $\beta = \alpha d/\gamma^2$ and recalling \cref{eq:strconv}, we see
that the left-hand side bounds the regret for $\alpha$-strongly-convex
functions, and we thus get
\begin{align*}
\sum_{t=1}^T f_t(x_t) - \sum_{t=1}^T f_t(x^\star)
&\le
\sum_{t=1}^T \lr{ g_t \cdot (x_t-x^\star) - \frac{\alpha}{2} \norm{x_t-x^\star}^2 }\\
&\le
\frac{\eps \alpha}{2\gamma^2} \norm{x_1-x^\star}^2 +
  \frac{\gamma^2}{2\alpha} \lr{ 1 + \log\lr{1 + \frac{\gamma^2 T}{\eps} }} ~
.
\end{align*}
To bound $\norm{x_1-x^\star}$ let $\tilde{x}$ be the
minimizer of $f(x) = \frac{1}{T} \sum_{t=1}^T f_t(x)$, which is a
$\gamma$-Lipschitz and $\alpha$-strongly convex function. Thus,
$$
0 
\le 
f(x) - f(\tilde{x})
\le 
\nabla f(x) \cdot (x-\tilde{x}) - \tfrac{\alpha}{2}\norm{x-\tilde{x}}^2
\le 
\gamma \norm{x-\tilde{x}} - \tfrac{\alpha}{2}\norm{x-\tilde{x}}^2 ~,
$$
thus $\norm{x - \tilde{x}} \le \frac{2\gamma}{\alpha}$ and
$\norm{x-x^\star} 
\le
\norm{x - \tilde{x}} + \norm{\tilde{x}-x^\star} 
\le
\frac{4\gamma}{\alpha}$.
A choice of $\eps = \gamma^2$ gives us the regret bound
\begin{align*}
\sum_{t=1}^T f_t(x_t) - \sum_{t=1}^T f_t(x^\star)
\le \frac{\gamma^2}{\alpha} \lr{8 + \log{T} } ~ .
\end{align*}

\section{Brief Discussion} \label{discussion:sec}
While our focus in this paper was on the derivation of AdaGrad and ONS as
special cases of \ar, our apparatus can be used to derive new adaptive
regularization algorithms. In addition to the $p$-norm regularization above,
it is also seamless to derive block-diagonal variants of \ar, which
structurally interpolate between the full and diagonal versions.  A more
challenging task is to generalize \ar to incorporate other, more complex
structures of matrices that can efficiently capture intricate dependencies
between different parameters. Another possible extension left to explore is
allowing time-varying potentials in \ar. We plan to study additional
potential functions and examine their empirical properties in future work.

\subsection*{Acknowledgments}
We would like to thank Elad Hazan and Roi Livni for numerous
stimulating discussions in the early stages of this research.  We also thank
Roy Frosting and Kunal Talwar for helpful comments and suggestions.

\bibliographystyle{abbrvnat}
\bibliography{adareg}

\appendix

\section{Online optimization mini-toolchest}
\label{tech:app}

\paragraph{The FTL-BTL lemma.}
We give for completeness a proof of the FTL-BTL Lemma.
Note that $\psi_0$ can be viewed as a regularization term.
\begin{lemma}[FTL-BTL Lemma] \label{sec:ftl-btl}
Let $\psi_0,\ldots,\psi_T : \cX \mapsto \reals$ be an arbitrary sequence of
functions defined over a domain $\cX$.
For each $t \ge 0$, let $x_t \in \argmin_{x \in \cX} \sum_{s=0}^t \psi_s(x)$.
Then, the following inequality holds for $T\ge 1$,
\begin{align*}
\sum_{t=1}^T \psi_t(x_t) \le
	\sum_{t=1}^T \psi_t(x_T) + (\psi_0(x_T) - \psi_0(x_0)) ~ .
\end{align*}
\end{lemma}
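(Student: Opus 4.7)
The plan is to prove the FTL-BTL Lemma by induction on $T$. The base case $T=1$ reduces to showing $\psi_0(x_0) \le \psi_0(x_1)$, which is immediate from the defining property of $x_0$ as a minimizer of $\psi_0$ over $\cX$.

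For the inductive step, assume the bound holds at $T-1$, namely
$$\sum_{t=1}^{T-1} \psi_t(x_t) \le \sum_{t=1}^{T-1} \psi_t(x_{T-1}) + (\psi_0(x_{T-1}) - \psi_0(x_0)).$$
Adding $\psi_T(x_T)$ to both sides will let me swap the induction hypothesis into the desired inequality, provided I can replace the occurrences of $x_{T-1}$ on the right-hand side with $x_T$. The key lever for this swap is the definition of $x_{T-1}$ as a minimizer of $\sum_{s=0}^{T-1}\psi_s$ over $\cX$, which gives
$$\psi_0(x_{T-1}) + \sum_{t=1}^{T-1}\psi_t(x_{T-1}) \le \psi_0(x_T) + \sum_{t=1}^{T-1}\psi_t(x_T).$$
Substituting this inequality into the post-induction bound yields exactly $\sum_{t=1}^T \psi_t(x_t) \le \sum_{t=1}^T \psi_t(x_T) + \psi_0(x_T) - \psi_0(x_0)$, closing the induction.

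There is no real obstacle here; the only subtlety worth flagging is the role of $\psi_0$, which serves as a ``regularizer'' that makes $x_0$ well-defined and supplies the additive correction $\psi_0(x_T)-\psi_0(x_0)$ in the final bound. An alternative one-line framing is to note that the standard ``be-the-leader'' bound $\sum_{t=0}^T \psi_t(x_t) \le \sum_{t=0}^T \psi_t(x_T)$ (proved by the same induction) immediately rearranges into the stated inequality after subtracting $\psi_0(x_0)$ from both sides and using $\psi_0(x_0)\le\psi_0(x_T)$ where needed; I would likely present the induction directly since it is cleaner and self-contained.
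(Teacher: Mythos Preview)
Your proof is correct and follows essentially the same induction as the paper; in fact, the paper's argument is precisely your ``alternative one-line framing,'' first rewriting the claim as $\sum_{t=0}^T \psi_t(x_t) \le \sum_{t=0}^T \psi_t(x_T)$ and then inducting from $T=0$. One minor note: that rearrangement is purely algebraic and does not require the extra inequality $\psi_0(x_0)\le\psi_0(x_T)$ you mention.
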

\begin{proof}
We rewrite above equation as,
$ \sum_{t=0}^T \psi_t(x_t) \le \sum_{t=0}^T \psi_t(x_T)$.
This inequality is true for $T = 0$.  Assume it holds true for
$T-1$, then
\begin{align*}
\sum_{t=0}^{T-1} \psi_t(x_t)
\le
\sum_{t=0}^{T-1} \psi_t(x_{T-1})
\le
\sum_{t=0}^{T-1} \psi_t(x_{T}) ~
,
\end{align*}
where the second inequality follows since
$x_{T-1} \in \argmin_{x \in \cX} \sum_{s=0}^{T-1} \psi_s(x)$.
Adding $\psi_T(x_{T})$ to both sides proves the result for $T$,
completing the induction.
\end{proof}

\paragraph{Online Mirror descent with time-varying norms.}
The abstract version of online mirror descent (originated from \citealp{KivinenWa97}), which is an online version
of the classic mirror descent method \citep{nemirovskii1983problem}, employs a Bregman
divergence to construct its update. As this paper is focused solely
on divergences of the form $\|\cdot\|_H$ we present a simplified
analysis of mirror descent for this specific setting. In this confined
form, mirror descent sets the next iterate $x_{t+1}$ as follows,
\begin{align} \label{eq:update2}
x_{t+1} &=
	\argmin_{x \in \cX} \Lrset{g_t \cdot x + \thalf\|x-x_t\|_{H_t^*}^2} ~ ,
\end{align}
where $g_t$ is an arbitrary vector (usually, one takes $g_t = \nabla
f_t(x_t)$, but for analysis below this need not be the case). This
update is equivalent to \cref{alg-step4} of \cref{alg:universal} as follows:
\begin{align*}
\Pi_{\cX}^{H_t^*}\Lr{ x_t - H_t g_t } 
&= \argmin_{x\in \cX} \Lrset{ \norm{x -
  (x_t - H_tg_t)}^2_{H_t^*} } \\
&= \argmin_{x\in \cX} \Lrset{ \norm{x - x_t}_{H_t^*}^2 + 2 (x-x_t)\tr
  H_t^{-1} H_tg_t + \norm{H_tg_t}_{H_t^*}^2 } \\
&= \argmin_{x\in \cX} \Lrset{ \thalf\norm{x - x_t}_{H_t^*}^2 + x\cdot g_t } ,
\end{align*}
where we eliminated terms independent of $x$.

The regret bound of mirror descent is provided
by the following lemma.
\begin{lemma} \label{lem:regret-md}
For any $x\in\cX$, $g_1,\ldots,g_T \in \reals^d$ and
$H_1,\ldots,H_T \in \cone$, if $x_t$ are given by \cref{eq:update2},
the following holds:
\begin{align*}
\sum_{t=1}^T g_t \cdot (x_t - x)
\le
\half \sum_{t=1}^T \lr{
\norm{x_t-x}_{H_t^*}^2 - \norm{x_{t+1}-x}_{H_t^*}^2}
+ \half \sum_{t=1}^T \norm{g_t}_{H_t}^2
.
\end{align*}
\end{lemma}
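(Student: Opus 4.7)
The plan is to use the standard one-step optimality analysis for online mirror descent, specialized to the quadratic Bregman divergence $\thalf\norm{\cdot}_{H_t^*}^2$. The core ingredients are (i) the first-order optimality condition for the constrained minimization defining $x_{t+1}$, (ii) the three-point identity for the Mahalanobis norm $\norm{\cdot}_{H_t^*}$, and (iii) a Fenchel--Young inequality to absorb the gradient contribution. The result follows by summing a per-round inequality over $t$; the left-hand side telescopes and a spurious $\norm{x_{t+1}-x_t}_{H_t^*}^2$ term cancels.

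Concretely, I would split the per-round linearized regret as $g_t\cdot(x_t-x)=g_t\cdot(x_t-x_{t+1})+g_t\cdot(x_{t+1}-x)$ and bound the two pieces separately. For the second piece, since $x_{t+1}$ minimizes the strongly convex objective $x'\mapsto g_t\cdot x'+\thalf\norm{x'-x_t}_{H_t^*}^2$ over the convex set $\cX$, the first-order optimality condition reads
\begin{align*}
\bigl(g_t+H_t^{-1}(x_{t+1}-x_t)\bigr)\cdot(x-x_{t+1}) ~\ge~ 0 \qquad \forall x\in\cX,
\end{align*}
which rearranges to $g_t\cdot(x_{t+1}-x)\le (x-x_{t+1})\tr H_t^{-1}(x_t-x_{t+1})$. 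Applying the three-point identity
\begin{align*}
2(x-x_{t+1})\tr H_t^{-1}(x_t-x_{t+1}) ~=~ \norm{x-x_t}_{H_t^*}^2-\norm{x-x_{t+1}}_{H_t^*}^2-\norm{x_{t+1}-x_t}_{H_t^*}^2
\end{align*}
then produces the telescoping pair of $\norm{\cdot}_{H_t^*}^2$ terms together with a leftover $-\thalf\norm{x_{t+1}-x_t}_{H_t^*}^2$.

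For the first piece, Fenchel--Young (or just $2ab\le a^2+b^2$ applied to $H_t^{1/2}g_t$ and $H_t^{-1/2}(x_t-x_{t+1})$) gives
\begin{align*}
g_t\cdot(x_t-x_{t+1}) ~\le~ \thalf\norm{g_t}_{H_t}^2+\thalf\norm{x_t-x_{t+1}}_{H_t^*}^2.
\end{align*}
Adding the two bounds, the $\thalf\norm{x_{t+1}-x_t}_{H_t^*}^2$ terms cancel exactly, leaving $g_t\cdot(x_t-x)\le \thalf\norm{g_t}_{H_t}^2+\thalf(\norm{x-x_t}_{H_t^*}^2-\norm{x-x_{t+1}}_{H_t^*}^2)$. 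Summing over $t=1,\ldots,T$ yields the claim directly, since the right-hand side is exactly the stated bound (with $\progr_t(x)$ absorbed into the telescoping pair).

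The only subtle point, and hence the sole ``obstacle,'' is verifying that the cancellation of $\thalf\norm{x_{t+1}-x_t}_{H_t^*}^2$ is exact rather than approximate: this is the reason we chose Fenchel--Young with precisely the conjugate norms $\norm{\cdot}_{H_t}$ and $\norm{\cdot}_{H_t^*}$. Note that the lemma is stated for arbitrary vectors $g_t$, so no convexity of any $f_t$ is invoked here; the time variation of $H_t$ is likewise harmless because the identity and inequality above are applied \emph{inside} a single round and use only $H_t$.
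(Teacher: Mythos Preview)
Your proposal is correct and follows essentially the same argument as the paper: first-order optimality of the projected update, the three-point (law of cosines) identity for $\norm{\cdot}_{H_t^*}$, and the Cauchy--Schwarz/Young bound on $g_t\cdot(x_t-x_{t+1})$, summed over $t$. One cosmetic slip: as written, both your rearranged optimality inequality and your three-point identity have a sign flipped (the correct identity is $2(x-x_{t+1})\tr H_t^{-1}(x_t-x_{t+1}) = \norm{x-x_{t+1}}_{H_t^*}^2+\norm{x_t-x_{t+1}}_{H_t^*}^2-\norm{x-x_t}_{H_t^*}^2$), but the two errors cancel and your per-round conclusion is right.
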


\begin{proof}
First-order optimality conditions imply that the minimum $y$ of a convex
function $\psi$ subject to domain constraints $x\in\cX$, then
$\nabla\psi(y)\cdot(y-x) \le 0$ for all $x\in\cX$. Taking as $\psi$ the
objective minimized by mirror descent on round $t$, then
$\nabla\psi(x_{t+1}) = g_t + (x_{t+1} - x_t)\tr H_t^{-1} $ and thus for $x\in\cX$,
$$
(g_t  +  (x_{t+1} - x_{t})\tr H_t^{-1}) \cdot (x_{t+1} - x) \le 0 .
$$
Re-arranging terms, we get:
\begin{align} \label{eq:gt1}
\begin{split}
g_t\cdot (x_{t+1} - x) &\le (x_{t} - x_{t + 1})\tr H_t^{-1}  (x_{t+1} - x)\\
&= \thalf \norm{x_t - x}_{H_t^*}^2  - \thalf \norm{x_{t+1} -
  x}_{H_t^*}^2 - \thalf \norm{x_t - x_{t+1}}_{H_t^*}^2 ,
\end{split}
\end{align}
where the equality can be verified by expanding both sides.
On the other hand, using H\"older's inequality and the fact that $ab \le \half (a^2 + b^2)$ we obtain
\begin{align} \label{eq:gt2}
	g_t \cdot (x_t - x_{t+1})
\le
	\norm{g_t}_{H_t} \cdot \norm{x_{t+1} - x_t}_{H_t^*}
\le
	\thalf \norm{g_t}_{H_t}^2 + \thalf \norm{x_{t+1} - x_t}_{H_t^*}^2 .
\end{align}
Adding \cref{eq:gt1,eq:gt2}, we have
\begin{align*}
	g_t \cdot (x_t - x)
\le
	\thalf \norm{g_t}_{H_t}^2 +\thalf \norm{x_t - x}_{H_t^*}^2  - \thalf \norm{x_{t+1} - x}_{H_t^*}^2 .
\end{align*}
Summing over all $t$, we have the required inequality.
\end{proof}

\end{document}